\documentclass[journal]{IEEEtran}

\usepackage{graphicx}

\usepackage{algorithm}
\usepackage{algorithmic}

\usepackage{amsmath}
\usepackage{amsthm}

\usepackage{multirow,booktabs}

\DeclareMathOperator*{\argmin}{arg\,min}

\newtheorem{lemma}{Lemma}
\newtheorem{theorem}{Theorem}
\newtheorem{corollary}{Corollary}

\begin{document}

\title{Minimax Classifier for Uncertain Costs}
\author{
    Rui~Wang, Ke~Tang
    \thanks{Nature Inspired Computation and Applications Laboratory (NICAL),
            Hefei, Anhui 230027 China,
            E-mail: wrui1108@mail.ustc.edu.cn, ketang@ustc.edu.cn.
            }
}
\maketitle

\begin{abstract}
Many studies on the cost-sensitive learning assumed that a unique cost matrix is known for a problem. However, this assumption may not hold for many real-world problems. For example, a classifier might need to be applied in several circumstances, each of which associates with a different cost matrix. Or, different human experts have different opinions about the costs for a given problem. Motivated by these facts, this study aims to seek the minimax classifier over multiple cost matrices. In summary, we theoretically proved that, no matter how many cost matrices are involved, the minimax problem can be tackled by solving a number of standard cost-sensitive problems and sub-problems that involve only two cost matrices. As a result, a general framework for achieving minimax classifier over multiple cost matrices is suggested and justified by preliminary empirical studies.
\end{abstract}

\section{Introduction}
In many real world classification problems, different types of misclassifications commonly result in different costs. For example, in fraud detection problem, predicting a normal client as fraud will cut the profit, while predicting a fraud client as normal would usually lead to great loss \cite{Bolton_Hand_2002}. In these scenarios, it would be more desirable to minimize the total cost rather than the classification error. This kind of problem is referred to as cost sensitive-learning problem  \cite{Elkan2001}, and has attracted many interests in recent years due to its wide applications in the real world \cite{AbeZL04, Zhou2006, Dmochowski2010}.

So far, the majority of previous research on cost-sensitive learning assumes that the costs for different types of misclassifications, typically represented as a cost matrix, are uniquely specified before the classifier is applied to new data. Specifically, if the cost matrix is known before the training procedure, it can be integrated into the the learning algorithm to obtain a classifier with minimum total cost. This can be done by modifying the training data according to the cost matrix \cite{Domingos1999, Ting2002}, or by extending learning algorithms directly \cite{ZhouLiu_TKDE_2006, Masnadi-shirazi_Vasconcelos_Member_2011}. In addition to specialized methods, some alternative approaches, which are motivated by other learning problems, could also be employed to address cost-sensitive learning problems. This category of methods, including calibration methods \cite{Zadrozny_Elkan_2002}, threshold moving \cite{Dmochowski2010} and its variants \cite{BourkeKunS2008}, typically post process the output of a classifier to optimize its performance with respect to a objective (e.g., minimize the total cost or classification error). In this sense, it is not necessary to know the cost matrix in prior to the training phase, as long as it becomes available before testing \footnote{Sometimes, post-processing can also be considered as a part of the training procedure. From this point of view, the cost matrix still needs to be specified before training phase is finished.}.

All the above-mentioned approaches assume that a unique cost matrix is known for a given cost-sensitive problem. Unfortunately, in the real world, it could be very difficult for a practitioner to specify the cost matrix uniquely, for the reason that one may do not have much sense about the exact values of misclassification costs, or that the costs may vary under different circumstances and thus is uncertain in nature. In one word, the cost matrix for a real-world problem may be uncertain throughout both training and testing.

As a matter of fact, the difficulty of specifying a cost matrix has been acknowledged by many researchers. In the context of ROC analysis \cite{Provost2001}, it is claimed that a classifier can be built without any cost information, while still performs well in the scenarios where the cost matrix changes. Nevertheless, an underlying assumption behind this statement is that threshold moving (or any other similar methods) is employed to fine-tune the output of the classifier. Hence, as discussed above, the specified cost matrix is still required in the post-processing phase. Zadrozny and Elkan \cite{Zadrozny_Elkan_KDD_2001} considered the scenario where example-based misclassification costs are static but unknown. More recently, Liu and Zhou \cite{LiuZhou2010} investigates the problem of learning with cost intervals. Specifically, the misclassification cost is assumed as taking a value within a predefined interval, and an approach is developed to train a SVM that performs well for every possible value of cost.

Rather than striving to achieve satisfactory performance over all possible cost matrices, the aim of this work is to minimize the largest total cost over a finite set of possible cost matrices, i.e., to find the minimax classifier. Under mild assumptions, we prove that the minimax classifier over multiple cost matrices can be achieved by solving a set of standard cost-sensitive learning problems and a set of sub-problems involves only two cost matrices. This finding immediately suggests a general framework for seeking minimax classifier over arbitrary number of cost matrices. Moreover, since an interval can be transformed into a finite set of values via discretization, the framework is also applicable to the scenarios where only the largest and smallest costs for misclassification are available.

The rest of this paper is organized as follows. Preliminary backgrounds and related works are introduced with more details in Section 2. Section 3 presents the theoretical analysis of the minimax problem. Experimental studies are in following section, and we conclude the paper in Section 5.

\section{Preliminaries and Related Works}
In this section, we introduce the basic notations and backgrounds at first, and then review two works that are closely related to this study. One is the work from Liu and Zhou \cite{LiuZhou2010}, which also deals with the uncertain cost problem, but with different formulation and learning target, and the other focus on finding the minimax classifier for uncertain class prior \cite{Rodriguez2005}.

\subsection{Preliminaries}
Given a dataset $S=\{(x_1, y_1), \ldots, (x_n, y_n)\}$, $x_i=\{x_i^1, \ldots, x_i^m\}\in{\mathcal{R}^m}$ is the feature vector of instance $(x_i, y_i)$ and $y_i\in\{0, 1\}$ is the class label. Suppose that there is no cost with correct classification, a cost matrix $C$ can be represented by two values $c_0$ and $c_1$, denoting the cost of misclassifying an instance from class $0$ and class $1$ respectively. Also, we use $p_0$ and $p_1=1-p_0$ to represent the class priors, so there are $n_0=np_1$ and $n_1=np_1$ instances in each class. For any classifier $h$ from the hypothesis space $\mathcal{H}$, its total cost is,
\begin{equation}
\label{tc}
\begin{aligned}
  L&=np_0p_{10}c_0+np_1p_{01}c_1\\
  &=n_0p_{10}c_0+n_1p_{01}c_1,
\end{aligned}
\end{equation}
where $p_{10}\,(p_{01})$ is the probability that $h$ misclassifies instances from class $0\,(1)$ to class $1\,(0)$.

\subsection{Learning with Cost Intervals}
In a recent work, Liu and Zhou \cite{LiuZhou2010} considered a special form of the uncertain cost problem where $c_0$ is $1$, and $c_1$ is uncertain but within a predefined interval $[c_{min}, c_{max}]$. Their objective is to construct a classifier that performs well for every individual cost within $[c_{min}, c_{max}]$. Technically, the problem was transformed as finding the best surrogate cost $c_s$ to trained with, i.e., their learning target is,
\begin{equation}
 \begin{aligned}
  &\min_{h\in{\mathcal{H}}}{L(h, S, c_s)} \\
  &s.t.\,\,\,p(L(h,S,c) < \epsilon)>1-\delta, \forall\,c\in [c_{min}, c_{max}]\\
  &\,\,\,\,\,\,\,\,\,\,\,c_{min}\leq{c_s}\leq{c_{max}}
 \end{aligned}
\end{equation}
A SVM-based algorithm was proposed there, which primarily minimizes the largest total cost (i.e., $L(h, S, c_{max})$) and secondarily minimizes the total cost at mean cost $c_\mu=(c_{max}+c_{min})/{2}$, i.e., $L(h, S, c_\mu)$. Solid experimental results reported there confirmed the efficacy of the method.

However, to fit in with the interval formulation, one needs to artificially re-scale original cost matrices by different factors to assure every $c_0$ is $1$. Although this re-scaling process does no harm to traditional cost-sensitive learning as well as the study in \cite{LiuZhou2010}, it makes the comparison of total costs across different cost matrices meaningless. Considering that it is generally hard or even impossible to find a classifier that performances well on all costs over the interval (as suggested by \cite{LiuZhou2010} itself), the \emph{best} classifier they built may lead to very big total cost on original cost matrices for real-world problems.

\subsection{Minimax Classifier for Uncertain Class Priors}
In the many studies involving the minimax criterion \cite{DuPardalos1995}, those focused their attention on building minimax total cost classifier for uncertain class prior \cite{Rodriguez2005} are of particular interest to this study.

Formally, in case of uncertain class prior, the minimax classification problem is to find the following classifier,
\begin{equation}
\label{minimaxPrior}
h_p=\argmin_{h\in{\mathcal{H}}}\max_{P}{L(h,P,C)}
\end{equation}
It is well known that the total cost of a fixed classifier is a linear function of prior, while the optimal total cost (i.e., the Bayesian cost) is a concave function of prior \cite{Duda2001}. Therefore, suppose the best classifier is $h^*$ for a given class prior $P^*$, then the total cost function of $h^*$ w.r.t. prior would be a tangent line of the Bayesian total cost curve at $P^*$. Based on these elegant properties, Alaiz-Rodriguez et al proposed two algorithms based on neural networks model to find the minimax classifier iteratively in \cite{Rodriguez2005}. Readers interested in the details of the algorithms are referred to that paper.

Notice the deceptively symmetrical positions of prior and cost in Eq. (\ref{tc}), one may think that all the analysis and algorithms w.r.t. the uncertain prior problem can be employed directly for the uncertain cost problem concerned in this study. Unfortunately, that is not the case. For the reason that both $c_0$ and $c_1$ are free variables (i.e., the sum-to-one property of prior does not applied to cost), the concavity of Bayesian total cost for prior can not be transformed to cost. In the following, we consider the minimax problem for uncertain cost along a different way.

\section{Minimax Classifier for Uncertain Cost}
\subsection{Problem Formulation}
As mentioned above, this study focuses on minimizing the largest total cost over a finite set of possible cost matrices. Formally, given a set of cost matrices $U=\{C_1, \ldots C_k\}$, where $C_i=\{c^i_0, c^i_1\}$ is the $i$-th cost matrix, the learning target is to find,
\begin{equation}
    \label{minimaxTc}
    h_U=\argmin_{h\in{\mathcal{H}}}{\max_{C\in{U}}{L(h, S, C)}}.
\end{equation}
Since the uncertain cost is formulate as a set directly, the problem is widely applicable in practice, ready for future study on multi-class problems, and facilitating theoretical analysis. On the other hand, the best classifier selected by the minimax criterion is much more reliable.
\subsection{Problem Analysis}
\label{analysis}
For two different cost matrices $C_i$ and $C_j$ in $U$, if both $c_0^i\leq{c_0^j}$, and $c_1^i\leq{c_1^j}$, then the total cost of any classifier $h$ obtained on $C_i$ will be smaller than that on $C_j$. In this case, we say that $C_i$ is dominated by $C_j$. Furthermore, if there exist a cost matrix $C_d$ that dominates all others in $U$, the above minimax problem can be simplified as a standard cost-sensitive learning problem with fixed cost matrix $C_d$. Therefore,  given a minimax classification problem over a set of cost matrices, the first step one should take is to check and delete cost matrices that are dominated by any other cost matrix in $U$.

On the other hand, the performance of a classifier $h$ from the hypothesis space $\mathcal{H}$ can be mapped to a point in the $2$-D space with $p_{10}$ as the \emph{x}-axis and $p_{01}$ as the \emph{y}-axis. Similarly, for two different classifiers $h_a$ and $h_b$, if $p^{h_a}_{10}\leq{p^{h_b}_{10}}$, and $p^{h_a}_{01}\leq{p^{h_b}_{01}}$, then $L_{h_a}\leq{L_{h_b}}$, no matter what the cost matrix is. In this case, we say that $h_a$ dominates $h_b$. If a classifier is not dominated by any other classifier in $\mathcal{H}$, it is a non-dominated classifier. Following the concept in economics \cite{Pareto1964}, the front formed by all non-dominated classifiers in $\mathcal{H}$ is named as the \emph{Pareto front} (see Fig. \ref{pf}). When $\mathcal{H}$ is an infinite hypothesis space and dataset $S$ consist of enough samples, the front is continues. Obviously, for both standard cost-sensitive learning problem and the minimax problem concerned in this study, the optimal classifiers must be on the Pareto front.
\begin{figure}[ht]
        \begin{center}
        \centerline{\includegraphics[width=0.7\columnwidth]{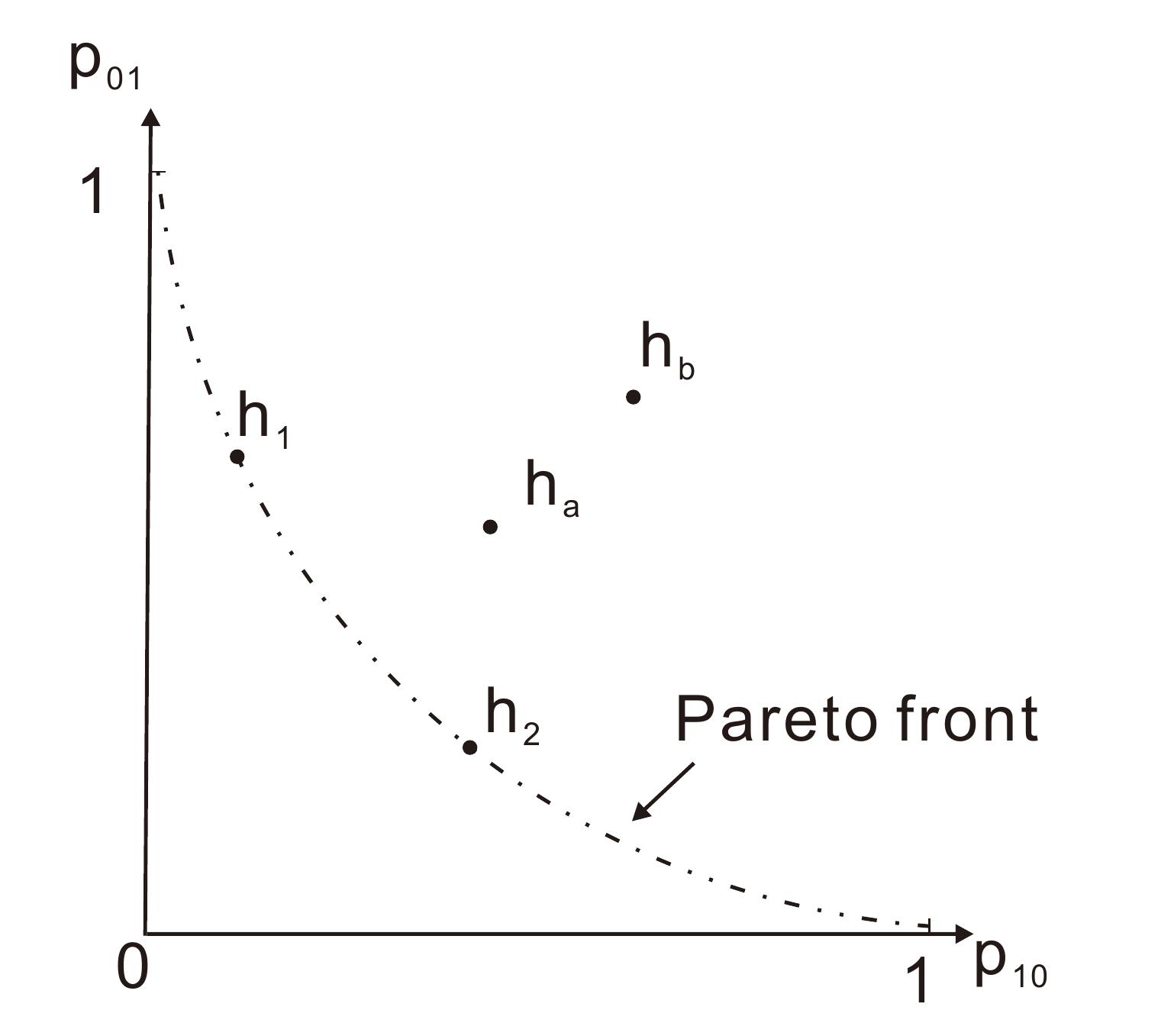}}
        \caption{Mapping classifiers to the $2$-D space with $p_{10}$ as the \emph{x}-axis and $p_{01}$ as the \emph{y}-axis. $h_a$ dominates $h_b$. $h_1$, $h_2$ are non-dominated classifiers, hence on the Pareto front.}
        \label{pf}
        \end{center}
        \vskip -0.15in
\end{figure}\\
Let us firstly consider the situation with only one cost matrix $C$, Lemma \ref{lemma1} reveals the relative order between the total costs of any two classifiers on the front.
\begin{lemma}
    \label{lemma1}
    For any two classifiers $h_1$, $h_2$ on the Pareto front, with $p^{h_1}_{10} \leq{ p^{h_2}_{10}}$ and $p^{h_1}_{01}\geq{  p^{h_2}_{01}}$, the following conclusions hold,
    \begin{itemize}
      \item
      $
        \frac{p^{h_1}_{01}-p^{h_2}_{01}}{p^{h_2}_{10}-p^{h_1}_{10}}>\frac{n_0c_0}{n_1c_1} \iff L_{h_1}>L_{h_2},
      $
      \item
      $
        \frac{p^{h_1}_{01}-p^{h_2}_{01}}{p^{h_2}_{10}-p^{h_1}_{10}}=\frac{n_0c_0}{n_1c_1} \iff L_{h_1}=L_{h_2},
      $
      \item
      $
        \frac{p^{h_1}_{01}-p^{h_2}_{01}}{p^{h_2}_{10}-p^{h_1}_{10}}<\frac{n_0c_0}{n_1c_1} \iff L_{h_1}<L_{h_2}.
      $
    \end{itemize}
\end{lemma}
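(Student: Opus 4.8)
The plan is to work directly from the total-cost expression in Eq. (\ref{tc}) and reduce all three claims to a single sign analysis of the difference $L_{h_1}-L_{h_2}$. The three bullets are mutually exclusive and jointly exhaustive: they compare the same ratio against the same threshold $\frac{n_0c_0}{n_1c_1}$ via $>$, $=$, $<$, and match these against the corresponding relations between $L_{h_1}$ and $L_{h_2}$. Because each statement is an \emph{if and only if}, it suffices to carry out one chain of equivalences and then observe that replacing the final comparison symbol by $=$ or $<$ propagates unchanged. So I would prove the first bullet carefully and obtain the other two for free.

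First I would substitute both classifiers into Eq. (\ref{tc}) and compute
\begin{equation*}
L_{h_1}-L_{h_2}=n_0c_0\,(p^{h_1}_{10}-p^{h_2}_{10})+n_1c_1\,(p^{h_1}_{01}-p^{h_2}_{01}).
\end{equation*}
Using the hypotheses $p^{h_1}_{10}\leq p^{h_2}_{10}$ and $p^{h_1}_{01}\geq p^{h_2}_{01}$, I would rearrange this into
\begin{equation*}
L_{h_1}-L_{h_2}=n_1c_1\,(p^{h_1}_{01}-p^{h_2}_{01})-n_0c_0\,(p^{h_2}_{10}-p^{h_1}_{10}),
\end{equation*}
where both parenthesised quantities are now non-negative. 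Then $L_{h_1}>L_{h_2}$ is equivalent to $n_1c_1(p^{h_1}_{01}-p^{h_2}_{01})>n_0c_0(p^{h_2}_{10}-p^{h_1}_{10})$, and dividing both sides by $n_1c_1(p^{h_2}_{10}-p^{h_1}_{10})$ yields exactly the claimed inequality $\frac{p^{h_1}_{01}-p^{h_2}_{01}}{p^{h_2}_{10}-p^{h_1}_{10}}>\frac{n_0c_0}{n_1c_1}$.

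The one step that needs care — and the only real obstacle — is the legitimacy of that division: I must argue that the denominator $p^{h_2}_{10}-p^{h_1}_{10}$ is strictly positive rather than zero, and that $n_1c_1>0$. The latter is immediate from the standing assumption that class sizes and misclassification costs are positive. For the former I would invoke the Pareto-front assumption: $h_1$ and $h_2$ are distinct non-dominated classifiers, so we cannot simultaneously have $p^{h_1}_{10}\leq p^{h_2}_{10}$ and $p^{h_1}_{01}\leq p^{h_2}_{01}$, as that would make $h_1$ dominate $h_2$. Combined with $p^{h_1}_{01}\geq p^{h_2}_{01}$, non-dominance forces both coordinate inequalities to be strict, in particular $p^{h_2}_{10}-p^{h_1}_{10}>0$, so the ratio is well defined and dividing by it preserves the direction of the inequality.

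Finally I would note that the equality and strict-less-than cases follow from the identical computation: replacing the comparison between $L_{h_1}$ and $L_{h_2}$ by $=$ or $<$ propagates through the same chain, since multiplying or dividing by the positive constant $n_1c_1(p^{h_2}_{10}-p^{h_1}_{10})$ never reverses a relation. This establishes all three bullets at once.
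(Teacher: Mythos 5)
Your proposal is correct and follows essentially the same route as the paper's proof: substitute Eq.~(\ref{tc}) for both classifiers and convert the comparison of $L_{h_1}$ and $L_{h_2}$ into the stated ratio inequality by a chain of equivalences, with the $=$ and $<$ cases following identically. You are in fact slightly more careful than the paper, which silently divides by $p^{h_2}_{10}-p^{h_1}_{10}$ without noting that non-dominance of distinct points on the Pareto front is what makes this quantity strictly positive.
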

\begin{proof}
    According to Eq. (\ref{tc})
    \[
    L_{h_1} = n_0c_0p^{h_1}_{10}+n_1c_1p^{h_1}_{01}
    \]
    \[
    L_{h_2} = n_0c_0p^{h_2}_{10}+n_1c_1p^{h_2}_{01}
    \]
    Therefore,
    \[
    \begin{array}{rr}
    L_{h_1}>L_{h_2} \iff  &\\
    n_0c_0p^{h_1}_{10}+n_1c_1p^{h_1}_{01} > n_0c_0p^{h_2}_{10}+n_1c_1p^{h_2}_{01} \iff  &\\ \frac{p^{h_1}_{01}-p^{h_2}_{01}}{p^{h_2}_{10}-p^{h_1}_{10}}>\frac{n_0c_0}{n_1c_1}&
    \end{array}
    \]
    Similarly, the other two cases also validated.
  \end{proof}
Notice that the left hand of each case in Lemma \ref{lemma1}, $(p^{h_1}_{01}-p^{h_2}_{01})/(p^{h_2}_{10}-p^{h_1}_{10})$, is the abstract value of the slope of the segment connected $h_1$ and $h_2$, which is determined by classifiers' performance, and $(n_0c_0)/(n_1c_1)$, on the other hand, is a constant given dataset $S$ and cost matrix $C$. That is, geometrically, the relative order between the total costs of  a pair of classifiers on the front is determined by slope of the segment connected these two classifiers.

Furthermore, since all the dominated classifiers can be ignored w.r.t. our problem, total cost Eq.(\ref{tc}) can be treated as a function of the classifiers on the Pareto front. For briefness, we further consider it as a function of $p_{10}$, and keep in mind that $p_{01}$ is determined correspondingly. Hereafter, we denote the total cost function as $L_C(p_{10})$ for cost matrix $C$. The following lemma describes the track of $L_C(p_{10})$ along the front.
  \begin{lemma}
  \label{lemma2}
    Assume the Pareto front is convex\footnote{Analogous to the \emph{ROCCH} technique, in case that the Pareto front is not convex, one can construct the convex hull of all non-dominated classifiers as the surrogate Pareto front. Please refer to \cite{Provost2001}, particularly Theorem 7 there, for further details.}, then the total cost function $L_C(p_{10})$ decreases monotonically to its minimum at first, and then increases monotonically over the front.
  \end{lemma}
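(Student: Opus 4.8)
The plan is to exploit the slope characterization of Lemma \ref{lemma1} together with a standard chord-slope property of convex curves, turning the pairwise cost comparisons into a global unimodality statement. Throughout, I would parameterize the front from left to right by increasing $p_{10}$ (so $p_{01}$ decreases strictly, since no front point may dominate another), write $p_{01}=f(p_{10})$ with $f$ convex and decreasing, and abbreviate the constant $\kappa = n_0c_0/(n_1c_1)>0$. With this notation the total cost along the front is $L_C(p_{10}) = n_0c_0\,p_{10} + n_1c_1\,f(p_{10})$.

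First I would record the one consequence of convexity I actually need: for any three front points at positions $p^a_{10}<p^b_{10}<p^c_{10}$, convexity of $f$ gives the usual chord-slope inequality, and because $f$ is decreasing these signed slopes are negative, so their \emph{absolute} values are non-increasing as one slides the chord rightward. In words, the steepness of the front decreases monotonically as $p_{10}$ grows. Moreover, for a convex decreasing $f$ the absolute slope of any chord lies between the two local absolute slopes at its endpoints, a fact I will use to push local information to chords.

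Next I would feed this into Lemma \ref{lemma1}. For a left point $h_1$ and a right point $h_2$, that lemma says moving rightward strictly decreases the cost exactly when the connecting segment's absolute slope exceeds $\kappa$, leaves it unchanged when the slope equals $\kappa$, and strictly increases it when the slope is below $\kappa$. Combining this with the monotone steepness above yields a single threshold position $p^*$, the point where the local front slope equals $\kappa$: for any two points both at or left of $p^*$ the connecting chord is at least as steep as $\kappa$, so $L_C$ is non-increasing there, and symmetrically for points at or right of $p^*$ the chord is shallower than $\kappa$, so $L_C$ is non-decreasing there; any flat stretch where the slope equals $\kappa$ sits exactly at the minimum. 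This is precisely the asserted ``decrease then increase'' shape. Equivalently, and more slickly, one may simply note that $L_C(p_{10})$ is a linear function plus the positive multiple $n_1c_1$ of the convex $f$, hence convex itself, and a convex function on an interval is unimodal; I would most likely present the Lemma-\ref{lemma1} argument, since it stays inside the geometric toolbox already built in the paper.

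The step I expect to be the main obstacle is upgrading the pairwise comparisons of Lemma \ref{lemma1} into a genuinely \emph{global} monotonicity claim without smuggling in smoothness: one must rule out any oscillation of $L_C$, and this is exactly where the monotone-steepness consequence of convexity does the essential work (via the chord-between-endpoint-slopes fact). I would also phrase the conclusion so that it subsumes the degenerate cases — if the front is everywhere steeper than $\kappa$ the minimum sits at the right endpoint ($L_C$ monotone decreasing), and if it is everywhere shallower the minimum sits at the left endpoint ($L_C$ monotone increasing) — both consistent with the stated behavior.
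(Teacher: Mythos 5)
Your proposal follows essentially the same route as the paper's own proof: establish via convexity that the absolute slopes of chords along the front are non-increasing in $p_{10}$, then invoke Lemma~\ref{lemma1} to translate the slope-versus-$\kappa$ comparison into the decrease-then-increase shape of $L_C$. You are in fact somewhat more careful than the paper about the step from pairwise comparisons to global unimodality (and your aside that $L_C$ is linear plus a positive multiple of a convex function, hence convex, is a valid and slicker shortcut), but the core argument is the same.
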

  \begin{proof}
    Given any three adjacent classifiers on the front, $h_1, h_2, h_3$, without loss of generality, we suppose $p^{h_1}_{10}<p^{h_2}_{10}<p^{h_3}_{10}$. Since the curve of the Pareto front is decreasing and convex, $h_3$ must lay on the right-side of the line passes $h_1$ and $h_2$. Plus the fact that $p^{h_1}_{10}<p^{h_2}_{10}<p^{h_3}_{10}$ and $p^{h_1}_{01}>p^{h_2}_{01}>p^{h_3}_{01}$, we have
    \[
    \frac{p^{h_1}_{01}-p^{h_2}_{01}}{p^{h_2}_{10}-p^{h_1}_{10}}\ge\frac{p^{h_2}_{01}-p^{h_3}_{01}}{p^{h_3}_{10}-p^{h_2}_{10}}.
    \]
    Since $h_1, h_2, h_3$ are three arbitrary adjacent classifiers on the front, it comes that the abstract value of the slope is adjacently and monotonically non-increasing along the front.\\
    Suppose the classifier of minimal total cost for cost matrix $C$ is $h_C$, with the total cost $L_C(p_{10}=p^{h_C}_{10})$, then according to Lemma \ref{lemma1}, $L_C(p_{10})$ decreases monotonically to $(p^{h_C}_{10}, L_C(p_{10}))$ at first, and then increases monotonically.
  \end{proof}
In fact, Lemma \ref{lemma2} describes the behavior of total cost function for standard cost-sensitive learning problem (i.e., with only one cost matrix), and many cost-sensitive learning methods published in the literature could be used, hopefully, to find the minimum point. See Fig. \ref{tcvsp10} for an illustration\footnote{Note the total cost curve was drawn for illustration purpose, the convexity it appears is not implied nor has been proved.}.
\begin{figure}[ht]
        \begin{center}
        \centerline{\includegraphics[width=0.7\columnwidth]{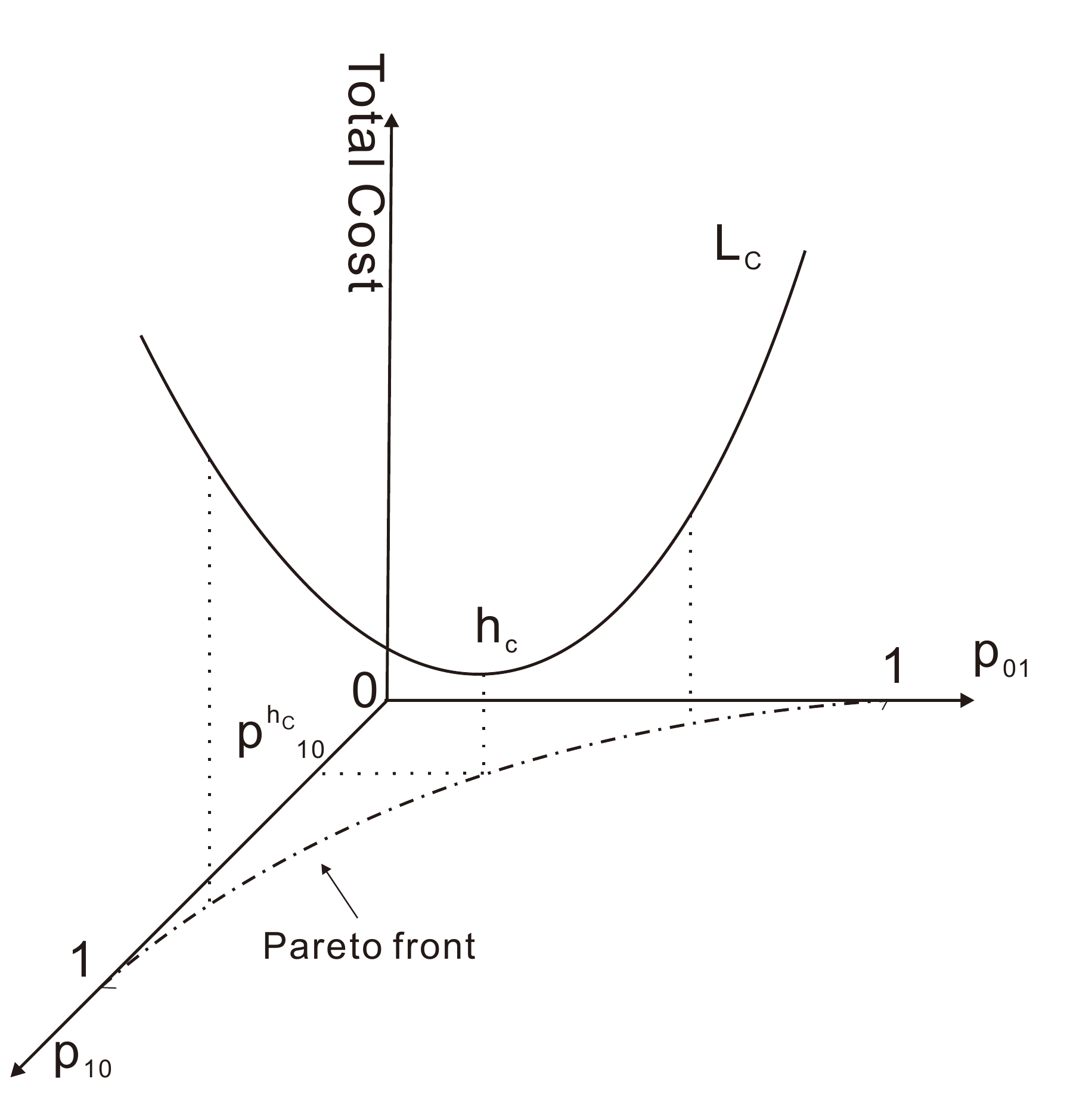}}
        \caption{The total cost curve vs. classifiers on the Pareto front. From the perspective of $p_{10}$, it also decreases at first, and then increases.}
        \label{tcvsp10}
        \end{center}
        \vskip -0.15in
\end{figure}
\begin{figure}[ht]
        \vskip 0.05in
        \begin{center}
        \centerline{\includegraphics[width=0.7\columnwidth]{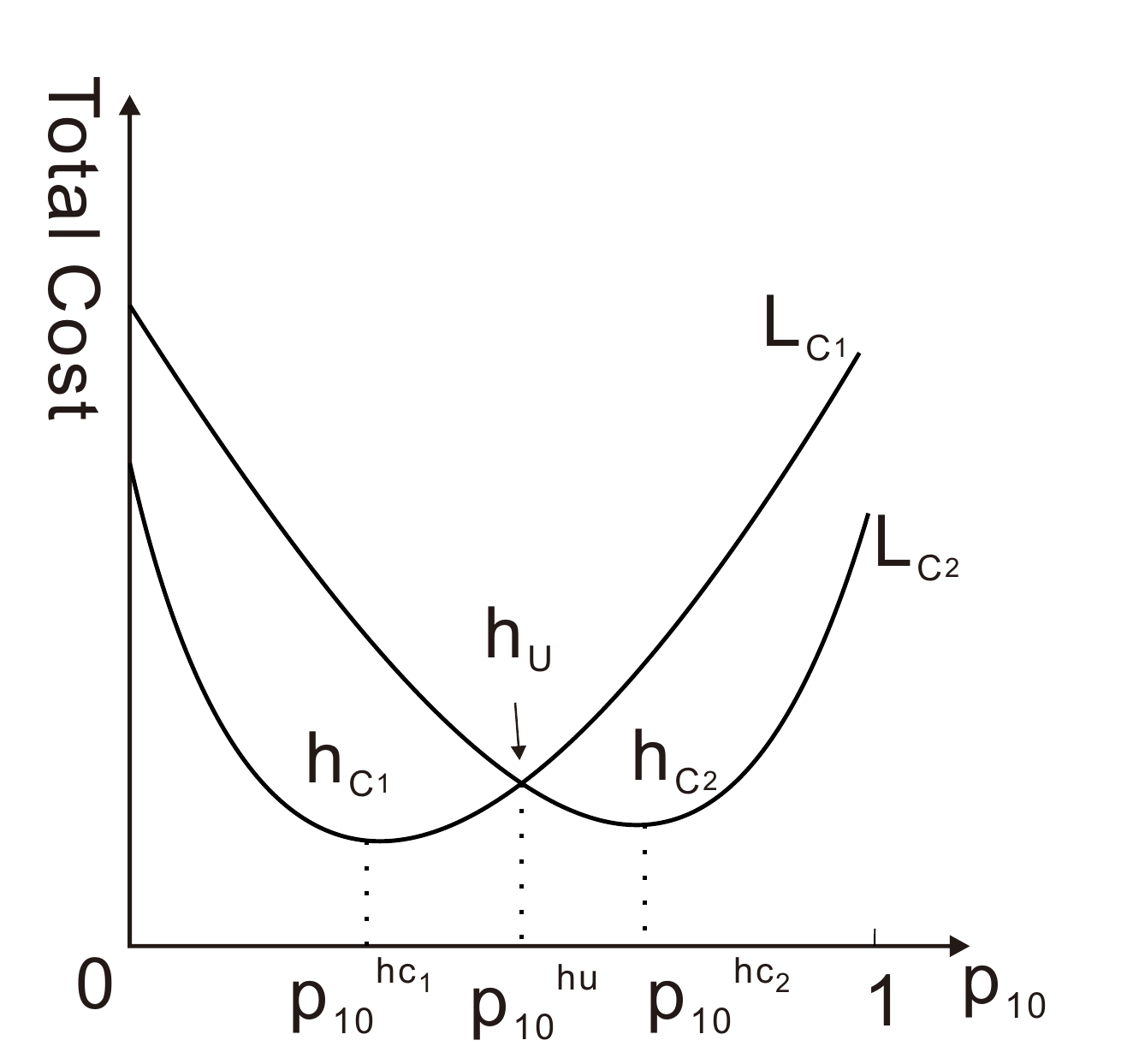}}
        \caption{The total cost curves for $C_1$ and $C_2$. Each of them decreases at first, and then increases monotonously. $h_1$ and $h_2$ are the best classifiers for $C_1$ and $C_2$, and $h_c$ is the minimax classifier for these two cost matrices.}
        \label{minimax2c}
        \vskip -0.15in
        \end{center}
\end{figure}\\
Now, we are ready for considering the situation with multiple cost matrices. For a set of $k$ cost matrices, there are $k$ total cot curves correspondingly. Each of them decreases to its own minimum at first and then increases. Fig. \ref{minimax2c} shows a example consist of two cost matrices. We can see that, in this case, the minimax total cost locates at the cross point of these two curves. Generally, the position of the minimax classifier for multiple cost matrices is confined by the following theorem.
\begin{theorem}
  \label{theorem}
  For $k$ different total cost curves, each of them decreases to its own minimum at first and then increases monotonically, the minimax total cost locates at one of the two types of positions,
    \begin{enumerate}
      \item minimum point of an individual curve,
      \item point where curves get crossed.
    \end{enumerate}
\end{theorem}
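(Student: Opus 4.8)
The plan is to recast the minimax objective as the minimization of the pointwise \emph{upper envelope} of the $k$ curves, and then to classify the minimizer according to how many curves are active at it. Concretely, I would define $M(p_{10}) = \max_{C \in U} L_C(p_{10})$, the largest total cost at each point of the Pareto front. Each $L_C$ is continuous with the valley shape established in Lemma~\ref{lemma2}, and the front is compact, so $M$ is continuous and attains its minimum at some point $p^*$; this $p^*$ is precisely the location of the minimax classifier. Let $A = \{ C \in U : L_C(p^*) = M(p^*)\}$ be the set of curves achieving the maximum at $p^*$, which is nonempty because the maximum is attained.

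First I would treat the case $|A| = 1$, say $A = \{C_j\}$. Every other curve lies strictly below $L_{C_j}$ at $p^*$, so by continuity this strict gap persists on a neighborhood of $p^*$, whence $M \equiv L_{C_j}$ locally. Consequently $p^*$ is a local minimizer of the single curve $L_{C_j}$ (a one-sided minimizer if $p^*$ is an endpoint of the front). Invoking the shape of $L_{C_j}$ from Lemma~\ref{lemma2} — it decreases to its minimum and then increases — its only local minima are global minima, so $p^*$ is a minimum point of an individual curve. This is the first type of position.

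The remaining case is $|A| \ge 2$: at least two curves attain the common value $M(p^*)$ at $p^*$, so $p^*$ is an intersection (crossing) point of two curves, the second type of position. The two cases exhaust all possibilities since $|A| \ge 1$ always. To confirm that such crossings are genuine (one curve descending while another ascends, as in Fig.~\ref{minimax2c}) rather than spurious, I would note that optimality of $p^*$ forces at least one active curve to be non-increasing and at least one to be non-decreasing through $p^*$, again reading off the monotone structure of Lemma~\ref{lemma2}.

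The step I expect to be the main obstacle is the boundary bookkeeping inside the first case: arguing that a \emph{local}, possibly one-sided, minimizer of a single active curve must coincide with that curve's \emph{global} minimum. This is exactly where the monotone valley structure of Lemma~\ref{lemma2} is indispensable, since without it a one-sided minimizer at an endpoint need not be a true minimum. I would resolve it by using the fact that each individual curve attains its minimum at a bona fide classifier lying on the front: this makes $M$ strictly decrease at the left end and strictly increase at the right end, so any minimizer sits where the same valley-shape argument applies and the clean two-case dichotomy goes through.
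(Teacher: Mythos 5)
Your proof is correct and is essentially the paper's argument in contrapositive form: the paper assumes the minimax point is neither a single-curve minimum nor a crossing, notes that one curve then strictly dominates the others near that point, and uses the valley shape of Lemma~\ref{lemma2} to move toward that curve's minimum and reduce the maximum. Your case split on the active set $A$ of the upper envelope packages the same two ingredients (persistence of the strict gap by continuity, and the monotone valley structure) directly rather than by contradiction, and your added care about existence of the minimizer and endpoint behaviour only tightens what the paper leaves implicit.
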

\begin{proof}
    Suppose the minimax total cost locates at neither one of the two types of positions, without loss of generality, we assume it is on the total cost curve of $C_i$ (i.e., $L_{C_i}$). Note that the minimax classifier is $h_U$, we have,
    \[
    L_{C_i}(p^{h_U}_{10})>L_{C_j}(p^{h_U}_{10}) \, \text{for}\, i\neq{j}.
    \]
    There is no equality because the the minimax total cost is not obtained at a type-2 point. On the other hand, since the minimax total cost is obtained neither at a type-1 point, according to Lemma \ref{lemma2}, we can find another classifier $h'_U$ such that,
    \[
    L_{C_i}(p^{h_U}_{10})>L_{C_i}(p^{h'_U}_{10})>L_{C_j}(p^{h'_U}_{10}) \, \text{for}\, i\neq{j}.
    \]
    This means the minimax total cost can be reduced, which conflicts with the definition of minimax. Therefore, the theorem is validated.
\end{proof}
 So, in order to find the minimax classifier, we just need to examine every classifier corresponds to the two types of positions. However, without further information, any pair of total cost curves may cross each other several times in practice, hence it would be very expensive or even impossible to examine all these points without omission. Fortunately, this obstacle can be removed elegantly by the following corollary.
\begin{corollary}
  \label{corollary}
    For a set of $k$ cost matrices $U=\{C_1, \ldots, C_k\}$, the minimax total cost classifier $h_U$ belongs to one of following two categories,
    \begin{enumerate}
      \item classifiers that minimize the total cost for an individual cost matrix,
      \item classifiers that \textbf{minimax} the total cost for a pair of cost matrices.
    \end{enumerate}
\end{corollary}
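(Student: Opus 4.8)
The plan is to upgrade Theorem~\ref{theorem} from a statement about \emph{crossing points} (of which a single pair of curves may possess several) into a statement about the well-defined \emph{pairwise minimax}, thereby pinning the candidate set down to the $k$ individual minima plus the $\binom{k}{2}$ pairwise minimaxers. I would start from the dichotomy already granted by Theorem~\ref{theorem}: the minimax classifier $h_U$ sits either at a type-1 point (the minimum of one curve $L_{C_i}$) or at a type-2 point (an intersection of two curves). The type-1 case is immediate, since such an $h_U$ minimizes $L_{C_i}$ and hence falls into category~1; the whole argument therefore concentrates on the type-2 case.

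For the type-2 case I would write $G(p_{10})=\max_{C\in U}L_C(p_{10})$ for the upper envelope, so that $h_U$ is by definition a global minimizer of $G$, with value $V=G(p^{h_U}_{10})$. The first key step is to single out the two curves that matter. Among the curves attaining $V$ at $h_U$ (there are at least two, since $h_U$ is an intersection), I would argue that at least one, say $C_a$, is still decreasing at $p^{h_U}_{10}$ and at least one, say $C_b$, is still increasing there. This is forced by the global-min property: if every active curve were increasing, a small move to the left would push all of them below $V$ while, by continuity, the inactive curves remain below $V$, so $G$ would drop under its supposed minimum, a contradiction; the symmetric argument rules out all-decreasing. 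Here I lean on Lemma~\ref{lemma2}, which guarantees each $L_C$ is unimodal, so ``decreasing at $p^{h_U}_{10}$'' means that curve's own minimum lies weakly to the right and ``increasing'' means it lies weakly to the left.

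The second key step is to show that $h_U$ is exactly the minimax of the pair $\{C_a,C_b\}$, i.e.\ the global minimizer of $g_{ab}(p_{10})=\max(L_{C_a},L_{C_b})$. Since $C_a$ is unimodal with minimum at or to the right of $p^{h_U}_{10}$ and $L_{C_a}(p^{h_U}_{10})=V$, we have $L_{C_a}\ge V$ at every point at or to the left of $h_U$; symmetrically $L_{C_b}\ge V$ at every point at or to the right. Hence for \emph{every} $p_{10}$ at least one of the two curves is $\ge V$, so $g_{ab}(p_{10})\ge V=g_{ab}(p^{h_U}_{10})$ throughout. This certifies $h_U$ as the global minimizer of $g_{ab}$, placing it in category~2, and it does so regardless of how many times $C_a$ and $C_b$ happen to intersect elsewhere.

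I expect the main obstacle to be precisely the multiplicity of intersections flagged in the paragraph preceding the corollary: a naive reading of Theorem~\ref{theorem} only asserts ``some crossing,'' and two unimodal curves can cross repeatedly, so a generic crossing need not be the pairwise minimax. The resolution, and the delicate point to get right, is the disjointness observation above: the opposite monotonicities of the two \emph{binding} curves force the region $\{L_{C_a}<V\}$ (which lies strictly to the right of $h_U$) and the region $\{L_{C_b}<V\}$ (strictly to the left) to be disjoint, so the two curves are never simultaneously below $V$. I would also handle the degenerate cases carefully, namely three or more curves meeting at $h_U$ (pick any decreasing/increasing pair), a binding curve attaining its own minimum exactly at $h_U$ (which collapses back to category~1), and the weak-versus-strict monotonicity conventions inherited from the convexity assumption of Lemma~\ref{lemma2}.
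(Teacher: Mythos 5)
Your proposal is correct and follows essentially the same route as the paper's own proof: invoke the dichotomy of Theorem~\ref{theorem}, then at a type-2 point extract one binding curve that is still decreasing and one that is still increasing, and use the unimodality from Lemma~\ref{lemma2} to conclude that their pairwise maximum is everywhere at least the value attained at $h_U$. Your write-up is more careful than the paper's (notably in treating the inactive curves via continuity and the degenerate cases), but the underlying argument is the same.
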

\begin{proof}
  According to Theorem \ref{theorem}, if the minimax total cost is obtained at one of the type-1 positions, then the minimax classifier fall into the first category, thus the corollary is true. \\
  Otherwise, the minimax total cost is obtained at a cross point of total cost curves. We know that there are at least two total cost curves with different monotonic property at the cross point, otherwise, we can move $h_U$ in the direction that all involved curves are decreasing, leading to reduced minimax total cost. Let the $L_{C_i}$ is decreasing, and $L_{C_j}$ is increasing at $h_U$, then we know from Lemma \ref{lemma2} that the maximal total cost for $(C_i, C_j)$ is bigger with all other classifiers. Hence, the cross point is the also the minimax total cost for $(C_i, C_j)$. So, the corollary is also true in this case.
\end{proof}

According to Corollary \ref{corollary}, the minimax classification problem over multiple cost matrices is reduced to solving a set of standard cost-sensitive learning problems and a set of sub-problems involves only two cost matrices, saving the bother to consider the tradeoff among multiple cost matrices. Finally, the framework for solving the minimax classification problem over a set of cost matrices is summarized in Algorithm \ref{algoFramework}.

\begin{algorithm}[h]
   \caption{Framework of solving the minimax classification problem over a set of cost matrices}
   \label{algoFramework}
\begin{algorithmic}
   \STATE {\bfseries Input:} dataset $S$, a set of cost matrices $U=\{C_1, \ldots, C_k\}$
   \STATE deletes all dominated cost matrices in $U$,
   \IF{$U=\{C_1\}$}
   \STATE $h_U=\argmin_{h\in\mathcal{H}}L(h, S, C_1)$
   \ELSIF{$U=\{C_1, C_2\}$}
   \STATE $h_U=\argmin_{h\in\mathcal{H}}\max_{C\in\{C_1, C_2\}}L(h, S, C)$
   \ELSE
   \STATE $V=\emptyset$
   \FOR{$i=1$ {\bfseries to} $|U|$}
   \STATE find $h_{C_i}=\argmin_{h\in\mathcal{H}}{L(h, S, C_i)}$
   \STATE $V=V\bigcup\{h_{C_i}\}$
   \ENDFOR
   \FOR{$i=1$ {\bfseries to} $|U|$}
   \FOR{$j=1+1$ {\bfseries to} $|U|$}
   \STATE find $h_{ij}=\argmin_{h\in\mathcal{H}}\max_{C\in\{C_i, C_j\}}{L(h, S, C)}$
   \STATE $V=V\bigcup\{h_{ij}\}$
   \ENDFOR
   \ENDFOR
   \STATE $h_U=\argmin_{h\in{V}}\max_{C\in{U}}{L(h, S, C)}$
   \ENDIF
   \STATE{\bfseries Return:} the minimax classifier $h_U$
\end{algorithmic}
\end{algorithm}

\section{Experiments}
In the experiments, we compared three frameworks for solving the minimax problem. The first is to build the minimum total cost classifier for each possible cost matrix without considering any tradeoff among cost matrices at first, and then picks out the minimax classifier, the second is our framework described above, and the third one is to build the minimax classifier directly with all the possible cost matrices are under consideration simultaneously. For briefness, we denote these three frameworks as \textbf{S}, \textbf{SP}, and \textbf{M} respectively.

\subsection{Implementation}
Although there are many standard cost-sensitive learning methods, striving to minimize the total cost for one cost matrix, can be used to implement \textbf{S} and one part of \textbf{SP}, to the best of our knowledge, there is no particular method that can be used to implement \textbf{M} or the the other part of \textbf{SP} (i.e., minimax the total cost for two or more cost matrices). Hence, for the comparison purpose, we adopted a simplified form of the Generalized Additive Model (GAM) to implement all the three frameworks. Therefore, the empirical studies presented underneath are preliminary, and only intend to serve as a baseline for future study.

The GAM used to implement all the compared frameworks is,
\begin{equation}
  F(x)=sign(\sum_{i=1}^{T}f_i(x))
\end{equation}
where $T$ is the number of iterations, and $f_i$ is a decision stump, whose output is $1$ or $-1$. At each generation, we add one decision stump such that the current ensemble of decision stump $F_i$ get improved performance over $F_{i-1}$ on the predefined objective. This process repeats until the iteration number is ran out or there is no improvement.

With this simple GAM procedure, we are able to implement the three above-mentioned frameworks. That is, all necessary building blocks can be generated by setting the ``predefined objective'' to minimize the total cost for a single cost matrix, or minimax the total cost for a pair of cost matrices, or minimax the total cost for a set of cost matrices.

\subsection{Experimental Setup}
Ten datasets from the UCI machine learning repository \cite{Asuncion2007} were used in the experiments. Brief information about these datasets is summarized in Table \ref{datasets}.
\begin{table}
  \caption{Summary of the 10 datasets used in the empirical studies}
      \centering
      \begin{tabular}{lcc}
        \toprule
        Dataset& No. of Features & Class Distribution\\
        \midrule
        australian & 14 & 307:383\\
        crx  & 16 & 307:383\\
        german & 24 & 700:300\\
        heart& 13 & 150:120\\
        hill-valley & 100 & 612:600\\
        house-votes & 16 & 168:267\\
        kr-vs-kp & 36 & 1669:1527\\
        mushroom & 22 & 3916:4208\\
        sonar & 60 & 97:111\\
        wdbc & 30 & 357:212\\
        \bottomrule
      \end{tabular}
      \label{datasets}
\end{table}
Most of these ten classification problems are originally real-world cost-sensitive problems, for example the \emph{australian}, \emph{crx}, and \emph{german} problems are fraud detection problems, while the \emph{heart}, \emph{mushroom} and \emph{wdbc} problems are related to health of people. For these problems, the misclassification cost matrix is usually hard if not impossible to specified by practitioners, so the experiments on them are appropriate.

For each of the datasets, we compared the 3 frameworks on 4 set of cost matrices of different cardinalities. They are sets of 3 cost matrices, 5 cost matrices, 10 cost matrices, and 20 cost matrices. The value of each element of the matrices is randomly generated within $[0, 10)$. Besides, it is assured in advance that there is no dominated cost matrix in each set.

The iteration number in the GAM is set to $50$, and 20 times 5-fold cross validation procedure was employed to obtain stationary results. Hence, for each of $10\times{4}\times{3}$ configurations of (dataset, set of cost matrices, compared method), there are $20\times{5}$ total cost values. Based on these values, we furthermore conducted Wilcoxon signed rank test between \textbf{SP} method and the other two methods with significance level $5\%$.

\subsection{Results}
Table \ref{trainMMtc} and Table \ref{testMMtc} present the comparisons over each dataset on training and testing respectively. The value in each cell is the average total cost over 20 times 5-fold, and the best performance for each (dataset, cost set) configuration is in boldface. Moreover, the results of Wilcoxon signed rank test are denoted as superscripts on the values of \textbf{S} and \textbf{M} methods, a superscript of $1$ indicates the performance of \textbf{SP} is significantly better than that of corresponding method, $-1$ for significantly worse, and no superscript means there is no statistically significant difference between \textbf{SP} and corresponding method.

In summary, we can see that \textbf{SP} outperforms the other two methods in almost all cases, and keeps statistically comparable for the rest few cases. There is no case that \textbf{SP} is statistically worse (i.e., there is no $-1$ on the superscripts).

Of course, it is not surprising at all that \textbf{SP} defeats \textbf{S} completely in the experiments, since \textbf{SP} always checks a superset of classifiers compared to \textbf{S}. But these results at least provide a evidence that the \textbf{S} framework is not adequate for uncertain costs problems. Moreover, with a closer examination of the results in each fold, we can see that the performance of \textbf{S} and \textbf{SP} are identical sometimes, and \textbf{SP} is better if they are not. This is consistent with Corollary \ref{corollary}, since the classifier obtained by \textbf{S} could be the optimal minimax classifier in theory.

On the other hand, the superior of \textbf{SP} over \textbf{M} is more interesting. Unlike the \textbf{S} framework, \textbf{M} searches the hypothesis space with the true learning target directly (i.e., the minimax target). Therefore, the most plausible explanation is that the implementation of the \textbf{M} framework is no effective enough. Since ideally it could perform as good as the \textbf{SP} framework. However, as the similar problem encountered in multi-class classification problems, designing algorithms that can handle multiple tradeoff simultaneously is never a trivial work.
\begin{table*}[ht]
\caption{Total costs of the three methods on each dataset with different number of cost matrices during training}
\newsavebox{\rstTrain}
\begin{lrbox}{\rstTrain}
\begin{tabular}{l|*{3}{c}|*{3}{c}|*{3}{c}|*{3}{c}}
\toprule
 & \multicolumn{3}{c|}{3 cost matrices} & \multicolumn{3}{c|}{5 cost matrices} & \multicolumn{3}{c|}{10 cost matrices} & \multicolumn{3}{c}{20 cost matrices}\\
\midrule
DataSet & S & SP & M & S & SP & M & S & SP & M & S & SP & M\\
\midrule
australian & $1440.49^1$ & $\textbf{1367.74}$ & $1466.33^1$ & $1869.28^1$ & $\textbf{1587.75}$ & $2199.41^1$ & $1916.21^1$ & $\textbf{1515.35}$ & $2363.94^1$ & $2096.11^1$ & $\textbf{1439.85}$ & $2481.22^1$\\
crx & $282.02^1$ & $\textbf{272.43}$ & $937.48^1$ & $1758.45^1$ & $\textbf{1533.96}$ & $1962.19^1$ & $1898.56^1$ & $\textbf{1420.8}$ & $2187.72^1$ & $2170.47^1$ & $\textbf{1463.45}$ & $2432.3^1$\\
german & $2625.29^1$ & $\textbf{2249.45}$ & $3060.18^1$ & $2099.12^1$ & $\textbf{1998.53}$ & $3727.99^1$ & $1895.13^1$ & $\textbf{1593.92}$ & $3733.35^1$ & $1995.7^1$ & $\textbf{1752.62}$ & $4060.1^1$\\
heart & $435.88^1$ & $\textbf{344.39}$ & $605.44^1$ & $413.54^1$ & $\textbf{345.72}$ & $619.3^1$ & $302.15^1$ & $\textbf{214.9}$ & $667.01^1$ & $333.18^1$ & $\textbf{278.55}$ & $768.89^1$\\
hill-valley & $3109.49^1$ & $\textbf{2227.36}$ & $2228.87^1$ & $2370.07^1$ & $\textbf{2070.78}$ & $\textbf{2070.78}$ & $3095.08^1$ & $\textbf{2634.57}$ & $2634.97$ & $3809.36^1$ & $\textbf{2870.9}$ & $\textbf{2870.9}$\\
house-votes & $670.16^1$ & $\textbf{458.55}$ & $823.89^1$ & $807.34^1$ & $\textbf{687.18}$ & $1001.34^1$ & $772.51^1$ & $\textbf{449.74}$ & $1004.26^1$ & $575.23^1$ & $\textbf{387.44}$ & $1087.35^1$\\
kr-vs-kp & $9539.25^1$ & $\textbf{8309.86}$ & $9976.34^1$ & $7025.3^1$ & $\textbf{6495.6}$ & $9237.13^1$ & $6409.04^1$ & $\textbf{4561.48}$ & $9372.19^1$ & $7649.38^1$ & $\textbf{5423.56}$ & $11482.27^1$\\
mushroom & $22280.14^1$ & $\textbf{14009.54}$ & $26392.35^1$ & $14268.4^1$ & $\textbf{10101.95}$ & $17715.29^1$ & $20820.21^1$ & $\textbf{10103.79}$ & $23435.58^1$ & $16941.87^1$ & $\textbf{8202.44}$ & $24488.3^1$\\
sonar & $551.62^1$ & $\textbf{489.32}$ & $571.2^1$ & $280.42^1$ & $\textbf{257.66}$ & $382.92^1$ & $451.66^1$ & $\textbf{369.81}$ & $600.6^1$ & $316.12^1$ & $\textbf{236.27}$ & $404.28^1$\\
wdbc & $585.49$ & $\textbf{549.76}$ & $1429.25^1$ & $450.72^1$ & $\textbf{375.75}$ & $1277.75^1$ & $573.19^1$ & $\textbf{331.63}$ & $1501.69^1$ & $404.9^1$ & $\textbf{233.12}$ & $1439.93^1$\\
\bottomrule
\end{tabular}
\end{lrbox}
\scalebox{0.7}{\usebox{\rstTrain}}
\label{trainMMtc}
\end{table*}

\begin{table*}[ht]
\caption{Total costs of the three methods on each dataset with different number of cost matrices during testing}
\newsavebox{\rst}
\begin{lrbox}{\rst}
\begin{tabular}{l|*{3}{c}|*{3}{c}|*{3}{c}|*{3}{c}}
\toprule
 & \multicolumn{3}{c|}{3 cost matrices} & \multicolumn{3}{c|}{5 cost matrices} & \multicolumn{3}{c|}{10 cost matrices} & \multicolumn{3}{c}{20 cost matrices}\\
\midrule
DataSet & S & SP & M & S & SP & M & S & SP & M & S & SP & M\\
\midrule
australian & $363.1^1$ & $\textbf{350.57}$ & $374.96^1$ & $471.13^1$ & $\textbf{403.58}$ & $550.78^1$ & $483.6^1$ & $\textbf{390.21}$ & $595.37^1$ & $526.1^1$ & $\textbf{373.01}$ & $623^1$\\
crx & $71.19$ & $\textbf{70.74}$ & $235.8^1$ & $440.41^1$ & $\textbf{386.28}$ & $493.3^1$ & $476.17^1$ & $\textbf{362.7}$ & $547.51^1$ & $542.86^1$ & $\textbf{375.82}$ & $611.52^1$\\
german & $664.42^1$ & $\textbf{567.29}$ & $763.08^1$ & $531.22^1$ & $\textbf{512.17}$ & $936.35^1$ & $480.25^1$ & $\textbf{411.74}$ & $937.4^1$ & $506.24^1$ & $\textbf{451.19}$ & $1017.17^1$\\
heart & $116.32^1$ & $\textbf{97.64}$ & $155.65^1$ & $111.45^1$ & $\textbf{97.22}$ & $155.54^1$ & $82.09^1$ & $\textbf{64.5}$ & $169.22^1$ & $91.11^1$ & $\textbf{83.71}$ & $194.97^1$\\
hill-valley & $778.84^1$ & $\textbf{578.5}$ & $578.91$ & $604.4^1$ & $\textbf{536.96}$ & $\textbf{536.96}$ & $789.11^1$ & $\textbf{684.93}$ & $685.41$ & $984.32^1$ & $\textbf{748.74}$ & $\textbf{748.74}$\\
house-votes & $171.27^1$ & $\textbf{118.62}$ & $206.69^1$ & $204.17^1$ & $\textbf{176.01}$ & $250.3^1$ & $197.44^1$ & $\textbf{122.75}$ & $254.13^1$ & $149.98^1$ & $\textbf{104.16}$ & $272.02^1$\\
kr-vs-kp & $2380.23^1$ & $\textbf{2081.5}$ & $2495.92^1$ & $1757.88^1$ & $\textbf{1624.11}$ & $2318.33^1$ & $1609.81^1$ & $\textbf{1147.79}$ & $2344.12^1$ & $1905.09^1$ & $\textbf{1363.94}$ & $2869.24^1$\\
mushroom & $5578.79^1$ & $\textbf{3506.46}$ & $6603.53^1$ & $3569.52^1$ & $\textbf{2546.44}$ & $4430.6^1$ & $5178.54^1$ & $\textbf{2520.05}$ & $5872.11^1$ & $4269.31^1$ & $\textbf{2040.03}$ & $6114.33^1$\\
sonar & $142.89^1$ & $\textbf{130.41}$ & $147.37^1$ & $72.75$ & $\textbf{68.58}$ & $97.75^1$ & $119.6^1$ & $\textbf{103.21}$ & $153.96^1$ & $84.83^1$ & $\textbf{75.64}$ & $103.28^1$\\
wdbc & $150.34$ & $\textbf{142.44}$ & $363.01^1$ & $116.01$ & $\textbf{103.1}$ & $320.44^1$ & $151.82^1$ & $\textbf{95.9}$ & $377.93^1$ & $109.95^1$ & $\textbf{68.94}$ & $360.53^1$\\
\bottomrule
\end{tabular}
\end{lrbox}
\scalebox{0.75}{\usebox{\rst}}
\label{testMMtc}
\end{table*}
In summary, although we implemented the three compared frameworks with a preliminary and less effective model, the result reported in the paper confirms the efficacy of Corollary \ref{corollary}. Once we are equipped with particular designed method can solve the minimax problem over only two cost matrices effectively, it would be very exciting to see the full advantage of the \textbf{SP} framework.

\section{Conclusions and Discussions}
For many real-world cost-sensitive learning problems, the costs associated with misclassifications are uncertain in nature. Many existing cost-sensitive learning algorithms, which require the exact cost information (e.g., a unique cost matrix) being available, are not applicable for these problems. In this paper, we consider the situation where the cost information is provided as a set of cost matrices, and aim to achieve the minimax classifier over the cost matrices. It is theoretically proved that the classifier with minimax total cost is either the optimal classifier for a single cost matrix in the set, or the minimax classifier over a pair of cost matrices in the set. This result immediately leads to a framework for achieving minimax classifier over arbitrary number of cost matrices. Furthermore, it is also applicable in case that the cost information is provided as an infinite set, e.g., intervals, by combining with an appropriate sampling/discretization procedure. Preliminary empirical study has justified the efficacy of the framework.

Although there exist a lot of algorithms for standard cost-sensitive learning problems, achieving minimax classifier over a pair of cost matrices remains the major technical obstacle. Therefore, novel algorithms should be developed for this purpose to exploit the usefulness of our framework to the full extent. Furthermore, the theoretical analysis conducted in this work needs to be extended to multi-class problems so that the resultant framework can be generalized. These issues will be investigated in the future.

\bibliography{MyCollection}
\bibliographystyle{IEEEtran}
\end{document}